\newtheorem{theorem}{Theorem}[section]
\newcommand{\overbar}[1]{\mkern 1.5mu\overline{\mkern-1.5mu#1\mkern-1.5mu}\mkern 1.5mu}
\newacronym[plural=LPNNs,firstplural=Ladder Polynomial Neural Networks]{lopon}{LPNN}{Ladder Polynomial Neural Network}
\newcommand{\lopon}{\acrshort{lopon}\xspace}
\newcommand{\lopons}{\acrshortpl{lopon}\xspace}
\newcommand{\loponfunc}{\mathrm{lpnn}}
\icmltitlerunning{Ladder Polynomial Neural Networks}
\begin{document}

\twocolumn[
\icmltitle{Ladder Polynomial Neural Networks}

\icmlsetsymbol{equal}{*}

\begin{icmlauthorlist}
\icmlauthor{Li-Ping Liu}{tuftscs}
\icmlauthor{Ruiyuan Gu}{tuftscs}
\icmlauthor{Xiaozhe Hu}{tuftsmath}
\end{icmlauthorlist}

\icmlaffiliation{tuftscs}{Department of Computer Science, Tufts University}
\icmlaffiliation{tuftsmath}{Department of Mathematics, Tufts University}

\icmlcorrespondingauthor{Li-Ping Liu}{liping.liu@tufts.edu}

\icmlkeywords{polynomial, learning models, classification}

\vskip 0.3in
]

\printAffiliationsAndNotice{The work has been first submitted to ICLR 2019 (\href{https://openreview.net/forum?id=HJxN0CNFPB}{submission link}). Unfortunately the contribution was not sufficiently appreciated by reviewers. \\ }

\begin{abstract}
Polynomial functions have plenty of useful analytical properties, but they are rarely used as learning models because their function class is considered to be restricted. This work shows that when trained properly polynomial functions can be strong learning models. Particularly this work constructs polynomial feedforward neural networks using the \textit{product activation}, a new activation function constructed from multiplications.  The new neural network is a polynomial function and provides accurate control of its polynomial order.  It can be trained by standard training techniques such as batch normalization and dropout. This new feedforward network covers several previous polynomial models as special cases. Compared with common feedforward neural networks, the polynomial feedforward network has closed-form calculations of a few interesting quantities, which are very useful in Bayesian learning. In a series of regression and classification tasks in the empirical study, the proposed model outperforms previous polynomial models.

\end{abstract}

\section{Introduction}
Well studied by mathematicians, polynomial functions have many favorable theoretical properties. Polynomial models also bridge the analysis of general neural networks to the properties of polynomial functions  \citet{livni2014computational}. For example, polynomial models can approximate other feedforward neural networks, and they are polynomial-time learnable.

One method of constructing polynomial models is to use the quadratic function as activations in a feedforward network (\acrshort{ffq}). However, it has a clear drawback: an \acrshort{ffq} cannot have an arbitrary polynomial order, as its order grow exponentially with its number of layers. As a result, a deep \acrshort{ffq} is hard train, and its performance is not stable.

Polynomial learning models can also be devised by representing polynomial coefficients with some type of decomposition. We simply call these models as \emph{decomposition} models. One approach is to define a polynomial kernel over input features and network parameters \citep{blondel2016polynomial, blondel2016higher, blondel2017multi}. Another approach is to use a tensor-train decomposition as the coefficients of a polynomaial model \citet{chen2017}. These models do not have a layer structure, so they often need specialized training methods. Furthermore, it is very hard to compare these models with neural networks, so some of their good properties are not well understood. 

In this work, we propose a new method of constructing polynomial learning models in the form of feedforward networks. One key component of a feedforward network is the activation function. We devise the \emph{product activation}, which creates nonlinearity by multiplying a hidden layer to a linear transform of the. Then we construct a \acrfull{lopon} with product activations. The \lopon has a layer structure by construction,  but it is also a decomposition model at the same time. Therefore, it enjoys benefits from both sides. As a feedforward neural network, it can be trained by standard deep learning techniques such as batch normalization \citep{ioffe2015batch} and dropout \citep{srivastava2014dropout}. Its polynomial order, which is the number of hidden layers plus 1, can be exactly controlled. As a decomposition model, \lopon also covers two previous models as special cases. When network weights are stochastic, the moments of a \lopon's outputs can be computed in closed-form. This property is very useful in  Bayesian learning. 

The empirical study shows that the \lopon outperforms previous polynomial models in a list of classification and regression tasks. The investigation also indicates the necessity of dropout and batch normalization in training. In the setting of Bayesian learning, we show that Gaussian distributions can well approximate a \lopon's network output when the network is given a Gaussian prior.

\section{Related Work}
\parhead{Feedforward network with quadratic activations.} \citet{livni2014computational} analyze \acrshortpl{ffq} with quadratic activations and show several positive properties of polynomial neural networks. For example, they are as expressive as networks with threshold activations, and they are learnable in polynomial time. \citet{kileel2019expressive} analyze of the algebraic structure of polynomial functions behind \acrshort{ffq}. \citet{du2018power} show that training a one-hidden-layer \acrshort{ffq} is efficient when the model is overly parameterized. All these analyses depend on the special function form. There are also other special optimization methods \citep{lin2017second, soltani2018towards, soltani2019fast} for training \acrshort{ffq}s with one hidden layer. 

\parhead{Decomposition models.}
\citet{blondel2016polynomial} construct polynomial models with polynomial kernels. They also show that factorization machines \citep{rendle2010factorization} can be constructed in the same way with ANOVA kernels. \citet{blondel2016higher} propose high order factorization machines with high order ANOVA kernels. \citet{blondel2017multi} extend factorization machines and polynomial networks to output multiple values. \citet{chen2017} use tensor-train decomposition \citep{oseledets2011} to express the coefficients of a polynomial model. By design, the model is for small problems. The coefficients of the \lopon in this work also has a tensor-train \citep{oseledets2011}. 



\section{The Polynomial Neural Network}

We first define the general form of a feedforward neural network. Suppose the input to the neural network is a feature vector $\bx \in \bbR^{d_0}$ and denote $\bh^{0} = \bx$. Suppose the network has $L$ hidden layers, with each layer $\ell \in \{1, \ldots, L\}$ takes the input $\bh^{\ell - 1}$ and has the output $\bh^{\ell}$. Each layer is defined by  
\begin{align}
\bh^{\ell} = \sigma \left(\bW^{\ell}  \bh^{\ell - 1} \right). \label{eq:layer}
\end{align}
Here $\bW^{\ell}$ is the weight matrix for layer $\ell$, and $\sigma(\cdot)$ is the activation function. For notational simplicity, we omit intercept vectors for now and will include them later. 

We use a new activation, the \textit{product activation} $\sigma_{p}(\cdot)$ in the neural network.
\begin{align}
\sigma_{p}(\bu; \bV, \bx) = \bu \odot(\bV \bx). \label{eq:prod_act}
\end{align}
Here $\odot$ is the element-wise product. The learnable parameter $\bV$ is a matrix with size $(d \times d_0)$ when $\bu$ has $d$ entries.

Since $\bu=\bW^{\ell}\bh^{\ell - 1}$ is a function of $\bx$, the activation is nonlinear in $\bu$. Particularly, if $\bu$ is a polynomial function of $\bx$, then $\sigma_{p}(\bu; \bV, \bx)$ is also a polynomial function of $\bx$ with the polynomial order increased by 1. Note that the product activation is not a function of $\bu$ because different $\bx$ values may give the same $\bu$ value but different responses from $\sigma_{p}(\bu; \bV, \bx)$. 

The product activation is inspired by self-attention \citep{vaswani2017}, in which the multiplication (of hidden vectors and attention weights) is an important way of processing information. The product activation keeps the multiplication operation and removes all non-linear operations. 

Then we use product activations in a feedforward structure to construct a \lopon.  We use a different matrix $\bV^{\ell}$ for the product activation in each layer $\ell$. Suppose $\bh_L$ is the output of the neural network, the function of the \lopon ~ is formally defined as $\loponfunc(\bx; \theta) = \bh^{L}$, 
\begin{align}
& \bh^{0} =  \bx, \\
& \bh^{\ell} = \bW^\ell \bh^{\ell-1} \odot (\bV^{\ell} \bx), ~~ \ell = 1, 2, \ldots, L.\label{eq:h_ell}
\end{align}
Here $\odot$ denotes element-wise multiplication. Let $\theta = \left(\bW^1, \ldots, \bW^L, \bV^1, \ldots, \bV^L\right)$ denote all network parameters. 
The first hidden layer $\bh^{1}$ is a second-order polynomial 
of the input, and each activation increase the order by 1, so the hidden layer $\bh^{\ell}$ is an order $(\ell+1)$ polynomial. 

We further re-write the function with simple additions and multiplications. The $i$-th entry of $\bh^{\ell}$ is 
\begin{align}
h^{\ell}_i =  \left( \bW_i^{\ell} \bh^{\ell} \right)  \left(\bV^{\ell}_i \bx \right). \label{eq:layer_alt}
\end{align}
Here $\bW^{\ell}_i$ and $\bV^{\ell}_i$ are the $i$-th rows of $\bW$ and $\bV$ respectively, so both $\left( \bW_i^{\ell} \bh^{\ell} \right)$ and $\left(\bV^{\ell}_i \bx \right)$ are scalars. 

After expanding all product activations and taking out all summations, the $i_L$-th entry of the network function $\loponfunc(\bx; \theta)$ is
\begin{multline}
h^{L}_{i_L} = \sum_{i_{(L-1)} = 1}^{d_{(L-1)}}\cdots 
\sum_{i_{1}=1}^{d_1} \Bigg[\left( \prod_{\ell = 2}^L \bW_{i_\ell, i_{\ell-1}}^{\ell}
\right) \\ \left({\bW}_{i_1}^{1} \bx \right) \left(\prod_{\ell = 1}^{L} \bV^{\ell}_{i_\ell} \bx \right) \Bigg].
\label{eq:network_func}
\end{multline}
This equation further show that the polynomial order of the network is $L+1$. 

To include terms with different orders, we need to include intercept vectors. Suppose each layer has an intercept vector $\bb^{\ell}$, then the layer is defined by
\begin{align}
\bh^{\ell} = \sigma_p(\bW^{\ell} \bh^{\ell - 1} + \bb^{\ell}; \bV^{\ell}, \bx).
\label{eq:layer_intercept}
\end{align}
We can re-write the input, hidden vectors, and weight matrices in the following form, 
\begin{align}
&\hat{\bW}^{\ell} = 
\left[\begin{array}{cc}
\bW^{\ell} & \bb^{\ell} \\ 
\bzero^\top &  1
\end{array}\right], ~~ 
\hat{\bh}^{\ell} = \
\left[\begin{array}{c}
\bh \\
1
\end{array}\right], ~~ \label{eq:intercept1}\\
&\hat{\bV}^{\ell} = 
\left[\begin{array}{cc}
\bV^{\ell} & \bzero\\ 
\bzero^\top &  1
\end{array}\right], ~~
\hat{\bx} = \
\left[\begin{array}{c}
\bx \\
1
\end{array}\right],
\label{eq:intercept_absorb}
\end{align}
then we still have the previous form, $\hat{\bh}^{\ell} = \sigma_p(\hat{\bW}^{\ell} \hat{\bh}^{\ell-1}; \hat{\bV}^{\ell}, \hat{\bx})$, and then all previous derivations still apply. For notational simplicity, the following analysis continues to use notations without the intercept term. 

The \lopon shares the same principle as the ResNet \citep{he2016deep}: a deeper networks should not have larger training error than a shallower one. Similar to skip connections, which allows a deep ResNet to implement a shallower one, a special parameter setting of $\theta$ also reduces a \lopon to a shallower one. Setting $\bW^{\ell}=\bzero$ and $\bb^\ell = \bone$ in (\ref{eq:intercept1}) will reduce the \lopon to $L - \ell$  layers.

\section{Analysis}
In this section, we first connect \lopon with a few previous decomposition models, highlighting a few properties of \lopon as a polynomial function. Then we emphasize that \lopon can be trained with batch normalization and dropout. Finally we analyze \lopon from the perspective of neural networks and show its properties in Bayesian learning. 

\subsection{Relation with decomposition models}

By making comparison between a \lopon and previous polynomial models, we understand their respective weakness and strength. 

\parhead{Relation with polynomial kernels.} We first show that the polynomial networks constructed from polynomial kernel functions by \citet{blondel2016polynomial} are special cases of the \lopon model. The following theorem formalize the relationship, and its proof is in the supplement.

\begin{theorem} 
The learning models in the form of $y(\bx) = \sum_{k=1}^K \pi_k (\lambda + \bp_k^\top \bx)^m$ \citep{blondel2016polynomial} can be written as a \lopon function. 
\end{theorem}

Compared with a \lopon, a model constructed from polynomial kernels has a limited capacity. When it is written in the form of a \lopon, a kernel is used across all hidden layers, so it form is rather restricted. The model with multiple kernels has multiple hidden units, but there is no information exchange between hidden units from different kernels.  

\parhead{Relation with factorization machines.}
By the following theorem, second-order factorization machines \citep{rendle2010factorization} are special cases of \lopons. The proof is in the supplement. 
\begin{theorem}
The second-order factorization machines taking the form
$y(\bx) = w_0 + \bw_1^\top \bx + \sum_{i=1}^{d_0}\sum_{j =i+1}^{d_0} \bv_i^\top \bv_j x_i x_j$ \citep{rendle2010factorization} can be written as a \lopon function.   
\end{theorem}

Factorization machines execlude monomials that contain variables having exponents more than 1, e.g. $x_i^2$ in the example above, so it is hard to write high-order factorization machines into the \lopon form. However, it is easy to construct a \lopon to match all its non-zero coefficents. In this sense, factorization machines have less model capacity than \lopon. 

\parhead{Relation with tensor-train models.}
The coefficients of \lopon has a tensor-train decomposition \citep{oseledets2011}, so it is a special case of the tensor-train model \citep{chen2017}.
\begin{theorem}   
The coefficients of \lopon's network function has a tensor-train decompositoin.  
\end{theorem}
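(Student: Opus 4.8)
The plan is to expand the closed form (\ref{eq:network_func}) into monomials of $\bx$, read off the resulting array of coefficients, and then exhibit explicit tensor-train cores whose contraction reproduces that array. Recall that a tensor-train decomposition of an order-$(L+1)$ tensor $\mathcal{T}_{j_0 j_1\cdots j_L}$ writes it as a chain $\sum_{\alpha_1,\ldots,\alpha_L} G^{(0)}_{j_0\alpha_1} G^{(1)}_{\alpha_1 j_1\alpha_2}\cdots G^{(L)}_{\alpha_L j_L}$, where the $\alpha_k$ are the internal bond (rank) indices. Thus the entire argument reduces to identifying the correct cores $G^{(k)}$ and verifying the contraction by substitution; there is no inequality or limiting argument to establish.

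First I would fix an output coordinate $i_L$ and substitute the coordinate expansions $\bW^{1}_{i_1}\bx = \sum_{j_0} W^1_{i_1,j_0} x_{j_0}$ and $\bV^{\ell}_{i_\ell}\bx = \sum_{j_\ell} V^\ell_{i_\ell,j_\ell} x_{j_\ell}$ into (\ref{eq:network_func}). Collecting the coefficient of the monomial $x_{j_0} x_{j_1}\cdots x_{j_L}$ yields the array
\[
\mathcal{T}^{(i_L)}_{j_0\cdots j_L} = \sum_{i_1,\ldots,i_{L-1}} W^1_{i_1,j_0}\, V^1_{i_1,j_1}\Big(\prod_{\ell=2}^{L} W^\ell_{i_\ell,i_{\ell-1}} V^\ell_{i_\ell,j_\ell}\Big).
\]
The structural observation is that the internal summation indices $i_1,\ldots,i_{L-1}$, together with the output index $i_L$, are exactly the bonds of a tensor train. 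I would therefore define the cores by pairing each $\bV^\ell$ factor with the inter-layer weight $\bW^{\ell+1}$ sitting to its right: set $G^{(0)}_{j_0, i_1} = W^1_{i_1,j_0}$, set $G^{(k)}_{i_k, j_k, i_{k+1}} = V^k_{i_k,j_k}\, W^{k+1}_{i_{k+1},i_k}$ for $1\le k\le L-1$, and set $G^{(L)}_{i_L, j_L} = V^L_{i_L,j_L}$. Substituting these back and regrouping the $W$ and $V$ factors recovers the displayed array, so the bond dimensions, i.e. the TT-ranks, are precisely the hidden-layer widths $d_1,\ldots,d_{L-1}$ (and $d_L$ for the output leg). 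This makes the claimed inclusion in the tensor-train model of \citet{chen2017} completely explicit.

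The step that needs the most care is bookkeeping at the two ends of the chain rather than any hard estimate. The pairing above is forced by the requirement that each core carry exactly one external mode $j_k$: the first core $G^{(0)}$ must carry only $W^1$, while $G^{(1)}$ absorbs both $V^1$ and the first inter-layer weight $W^2$. At the other end the output coordinate $i_L$ is never summed; it is the last, open bond, so $G^{(L)}$ is simply the vector $V^L_{i_L,\cdot}$, and equivalently the full vector-valued output is one tensor train whose final bond indexes the output. This also explains the apparent off-by-one between the $L-1$ summed indices and the $L$ bonds of an order-$(L+1)$ train. Finally I would clarify what ``the coefficients'' means: the array $\mathcal{T}^{(i_L)}$ is not symmetric in $j_0,\ldots,j_L$, but because it is contracted against the symmetric tensor $\bx^{\otimes(L+1)}$ the value $h^L_{i_L} = \sum_{j_0,\ldots,j_L} \mathcal{T}^{(i_L)}_{j_0\cdots j_L} x_{j_0}\cdots x_{j_L}$ is unchanged, so it suffices to exhibit this one coefficient array with a tensor-train decomposition rather than its symmetrization. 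Running the same computation on the augmented weights $\hat{\bW}^\ell,\hat{\bV}^\ell$ of (\ref{eq:intercept1})--(\ref{eq:intercept_absorb}) extends the result to networks with intercepts.
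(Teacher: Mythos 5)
Your proof is correct and takes essentially the same approach as the paper: expand \eqref{eq:network_func} into monomials of $\bx$ and exhibit tensor-train cores built from paired rows of the $\bW$ and $\bV$ matrices, with the hidden-unit indices $i_1,\ldots,i_{L-1}$ (and the open output index $i_L$) serving as the bonds. The only difference is bookkeeping: the paper pairs $\bW^{\ell}$ with $\bV^{\ell}$ inside each core and prepends an identity core $\bG^0$ (via $\bW^0=\bI$, then invoking Lemma 1 of \citet{chen2017}) to absorb the extra $\bx$ factor from $\bW^1\bx$, whereas you shift the pairing to $(\bV^{k},\bW^{k+1})$, which removes the need for the auxiliary core and lets you verify the contraction directly.
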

\begin{proof}
Write $\bW^{1}_{i_1} \bx  =\sum_{i_0 = 1}^{d_0} \bW^{1}_{i_1, i_0} \bW^{0}_{i_0} \bx_{i_0}$ with $\bW^0$ being an identity matrix. Then \eqref{eq:network_func} can be written as 
\begin{multline*}
\bh^{L}_{i_L} \hspace{-0.1em} = \hspace{-0.1em} \sum_{i_{L-1} = 1}^{d_{L-1}} \hspace{-0.5em}\cdot\cdot\cdot 
\sum_{i_{1}=1}^{d_1}\sum_{i_{0}=1}^{d_0} \left[\left({\bW}_{i_0}^{0} \bx \right) \prod_{\ell = 1}^L \bW_{i_\ell, i_{\ell-1}}^{\ell} \bV^{\ell}_{i_\ell} \bx \right].
\label{eq:tt_form}
\end{multline*}

Let $\bG^{\ell} (i_\ell, n_\ell,  i_{\ell - 1}) =  \bW^{\ell}_{i_\ell, i_{\ell} - 1} \bV^{\ell}_{i_{\ell}, n_\ell}$ be a three-way tensor for $\ell = 1, \dots, L - 1$. Let $\bG^{L}$ be a three-way tensor with size 1 in the first dimension, and $\bG^{L}(1, n_L, i_{\ell - 1}) = \bW^{L}_{i_L, i_{L-1}} \bV_{i_{L}, n_{L}}$. Let $\bG^{0}$ be a three-way tensor with size 1 in the last dimension, and $\bG^{0}(:, :, 1) = \bI$. Then $\bh^{L}_{i_L} = \prod_{\ell = 0}^{L} (\bG^{\ell} \times_2 \bx)$. Here $\times_2$ is the 2-mode product of a three-way tensor and a vector; the product represents matrix multiplications. By Lemma 1 in \citep{chen2017}, the coefficents of $\bh^{L}_{i_L}$ is the tensor-train decomposition expressed by $\bG^0, \ldots, \bG^{L}$. 
\end{proof}

We can analyze the complexity of a \lopon and a tensor-train model. In the decomposition above, each three-way tensor $\bG^{\ell}$ with $\ell \ge 1$ is constructed from two matrices. A \lopon layer has about $(d_\ell \times d_{\ell - 1} + d_\ell \times d_{0})$ parameters, which is at the same level as a feedforward layer. However, we use general three-way tensors in a tensor-train model, then there will be  $ d_\ell \times d_{0} \times d_{\ell - 1}$ parameters in each ``layer'' and excessively many parameters in the entire model. Then it is unnecessary to use a dense $\bG^{\ell}$-s as in a general tensor-train model. 

\subsection{Special properties}

As a polynomial function, a \lopon has the following two interesting properties.

\parhead{Multilinear in parameters} The network function $\loponfunc(\bx; \theta)$ is multilinear in model parameters $\theta=\{\bW^1, \ldots, \bW^L, \bV^1, \ldots,  \bV^L\}$. We see so by examining the network function in \eqref{eq:network_func}: if we focus on one matrix (a $\bW^{\ell}$ or a $\bV^{\ell}$) and hold all other parameters fixed, then the network output is linear in this matrix. Furthermore, if we optimize the network against a convex loss function $\mathrm{loss}(\bh^{L}; y)$, then the loss is multiconvex in $\theta$.  

This propery means that the change of a single $\bW^\ell$ linearly change the network output. Therefore, the optimization of network output poses straightforward gradient directions to network weights in $\theta$. There are no issues of gradient explosion. Compared with a \lopon, the effect of network weights in a normal feedforward network are transformed through the network's hidden layers, so the gradient can be skewed when it is back-propagated through layers.

\parhead{The network function along a gradient}. The network function along a gradient direction is a univariate polynomial function. We can write the polynomial into its canonical form. 

 We compute the polynomial coefficients recursively. Let's restrict $\bx$ to a line $\bx=t \bg + \bx_0$, with $t$ being a scalar variable, $\bx_0 \in \mathbb{R}^{d_0}$ being a point, and $\bg \in\mathbb{R}^{d_0}$ being a direction. Then $\loponfunc(\bx; \theta)$ is a polynomial function of $t$ with order $L + 1$. 

We view each $\bh^{\ell}$ as a function of $t$ and use an operation $\alpha(\bh^{\ell})$ to extract polynomial coefficients of $\bh^{\ell}$. Note that each entry of $\bh^{\ell}$ is an $(\ell + 1)$-order polynomial and has $\ell + 2$ coefficients, so $\alpha(\bh^{\ell})$ is a matrix of size $d_\ell \times (\ell + 2)$. Also note that $\alpha(\cdot)$ is a linear operation. 

We have $\alpha(\bh^0) = \alpha(\bx) = [\bg, \bx_0]$. Then we calculate $\alpha(\bh^{\ell})$ recursively. 
Substitue the line expression into \eqref{eq:h_ell}, 
\begin{multline*}
\bh^{\ell} = \diag(\bV^{\ell} \bg) \bW^{\ell} \bh^{\ell - 1} t + \diag(\bV^{\ell} \bx_0) \bW^{\ell} \bh^{\ell - 1}. 
\end{multline*}

The function $\alpha(\cdot)$ is a linear operation. If $\boldf$ is a vector-valued polynomial function of $t$, and $\bW$ is a constant matrix with proper sizes, then 
$\alpha(\bW \boldf) = \bW \alpha(\boldf)$. We also have $\alpha(t \boldf ) = [\alpha(\boldf), \bzero]$, with $\bzero$ being zero vector with the same length as $\boldf$. 
The relation is true because $t \boldf$ raises the coefficients of $\boldf$ one order higher. 

By using these properties of $\alpha(\cdot)$, we have the recursive formula for computing $\alpha(\bh^{\ell})$. 
\begin{multline}
\alpha(\bh^{\ell}) = [\diag(\bV^{\ell} \bg) \bW^{\ell} \alpha(\bh^{\ell - 1}), \bzero] + \\ 
[\bzero,  \diag(\bV^{\ell} \bg) \bW^{\ell} \alpha(\bh^{\ell - 1})].
\end{multline}

If we want compute only the coefficents of lower-order monomials, then we only need to store the right-most few columns of $\alpha(\bh^\ell)$. We omit the details here. 

This property is useful in adversarial learning. The generation of adversarial samples often relies on the optimization of a perturbation of an instance $\bx$. If we know a perturbation direction, then finding the optimal perturbation along the direction is equivalent to finding the minimum of a univariate polynomial function, which can be solved efficiently. 

\subsection{Training with batch normalization and dropout}

As a layer network, a \lopon is trained with standard techniques including stochastic optimization, batch normalization (BN), and dropout, which have been proved to be effective in practice. We can apply batch normalization and dropout to a \lopon without any modification. Here we put the BN layer after the activation per some practitioners' advice. 

Let's consider one hidden layer and omit layer indices for notational simplicity. Let $\bh_{k}$ be the hidden layer of an instance $k$ in a batch, then the batch-normalized hidden layer $\overbar{\bh}_k$ is computed by  
\begin{align}
\overbar{\bh}_{k} = \gamma (\bh_k - \bmu) / (\bsigma + \epsilon)  + \beta.
\label{eq:bn}
\end{align}
Here the division $/$ is an element-wise operation, $\epsilon$ is a small positive number, and $(\gamma$, $\beta)$ are learnable  parameters. The two vectors, $\bmu$ and $\bsigma$, are computed from the batch during training and are constants during testing.

Note that the trained \lopon model with constant BN parameters is still a polynomial function because the BN operation in \eqref{eq:bn} is a linear operation. The model in training is not a polynomial function since $\bsigma$ is computed from the batch that includes $\bh_k$. Here we want to integrate BN parameters into network weights so that previous derivations still apply. 

Now we merge a BN layer into its previous \lopon layer. Let $\bW'$ and $\bV'$ denote weight matrices in the previous layer, then the equivalent \lopon layer is given by setting $\bW$ and $\bb$ in \eqref{eq:layer_intercept} as follows. 
\begin{align}
\bW &= \bW' ~ \diag(\gamma / (\bsigma + \epsilon) ), \quad \\
\bb &=  - \bW' ~ \diag(\gamma \bmu / (\bsigma + \epsilon) + \beta 
\end{align}

In this result, we can see that BN changes the norm of weight matrices. Based on the study by \citet{santurkar2018does}, BN can simply shrink the norms of weight matrices to avoid having steep slopes in the function surface.

Dropout can be directly applied to \lopon. In the training phase, using dropout is equivalent to removing some entries in summations of \eqref{eq:network_func} and rescaling the summation. In the testing phase, dropout have no effect, and the trained model is just as the definition above. 

\subsection{Moments of network outputs in Bayesian learning}

The \lopon is convenient for Bayesian learning attribute to the multilinear property. One important problem in Bayesian learning is to compute the distribution of network predictions when the network parameters are from a distribution $p(\theta)$. 
\begin{align*}
p(y | \bx) &= \int_{\theta} p\big(y ~|~ \bh_L = \loponfunc(\bx; \theta)\big) p(\theta) ~  \mathrm{d}\theta\\
&= \int_{\bh^{L}} p\left(y | \bh_L \right) p(\bh^{L}) ~ \mathrm{d}\bh^{L} 
\end{align*} 
$p(\theta)$ is either a prior or a distribution inferred from the data. 
The integeral is often easy to deal with if we have the distribution $p(\bh_L)$. We propose to approximate $p(\bh_L)$ with a Gaussian distribution, whose parameters can be decided by the moments of $\bh_L$.
 
Assume a prior $p(\theta)$ has all $\bW^{\ell}$ and $\bV^{\ell}$ matrices independent. The assumption is reasonable because the distribution of $\theta$ is often assumed to be a Gaussian distribution with independent componenets \citep{blundell2015}. Then we can compute the first two moments of the network outputs efficiently. 

By the multilinear property, we have the first moment  
\begin{align}
\bmu^{L} = \E{\theta}{\loponfunc(\bx; \theta)} =  \loponfunc\left(\bx; \E{\theta}{\theta}\right). 
\end{align}

We compute the second moment of $\bh^L$ recursively. Denote the second-order moments by $\bSigma^\ell = \E{\theta}{\bh^{\ell}(\bh^{\ell})^\top}$ for $\ell = 1, \ldots, L$. Let $\bSigma^0 = \bx \bx^\top$. The recursive computation of the second order moment of $\bh^{\ell}$ is 

\begin{multline}
\Sigma^{\ell}_{ij} = \E{\theta}{h^{\ell}_i h^{\ell}_j} =  \left(\bx^\top \E{\theta}{(\bV^{\ell}_{i})^{\top}\bV^{\ell}_{j}} \bx  \right) \cdot \\
\trace{\E{\theta}{(\bW_i^{\ell})^{\top} \bW_j^{\ell}} \bSigma^{\ell - 1}} 
\end{multline}

We then approximate the distribution of $\bh^{L}$ by $\mathcal{N}(\bmu^{L}, \bSigma^{L} - \bmu^{L}(\bmu^{L})^{\top})$.  
In the experiment we show that the approximation is very accurate when $p(\theta)$ is a Gaussian distribution. 

In many typical applications, the distribution of network predictions $p(y | \bx)$ is computable when we have an approximation of $p(\bh^{L} | \bx)$. In a regression problem that assumes a Gaussian distribution for $p(y | \bh^{L})$, then the marginal distribution $p(y | \bx)$ has a closed-form approximation. In a binary classification problem where $y = 1[\bh^{L} > 0]$, then approximation of $p(y=1 | \bx)$ is $\Phi(\bh^{L} / \sqrt{var(\bh^{L})})$, with $\Phi(\cdot)$ being the pmf of the standard Gaussain.  

\begin{figure*}[t]
\centering
\begin{tikzpicture} [x=1.3cm,y=1.3cm]
\node at (0,0){\includegraphics[width=\textwidth, trim={2.5cm 1.5cm 2.5cm 2.2cm},clip]{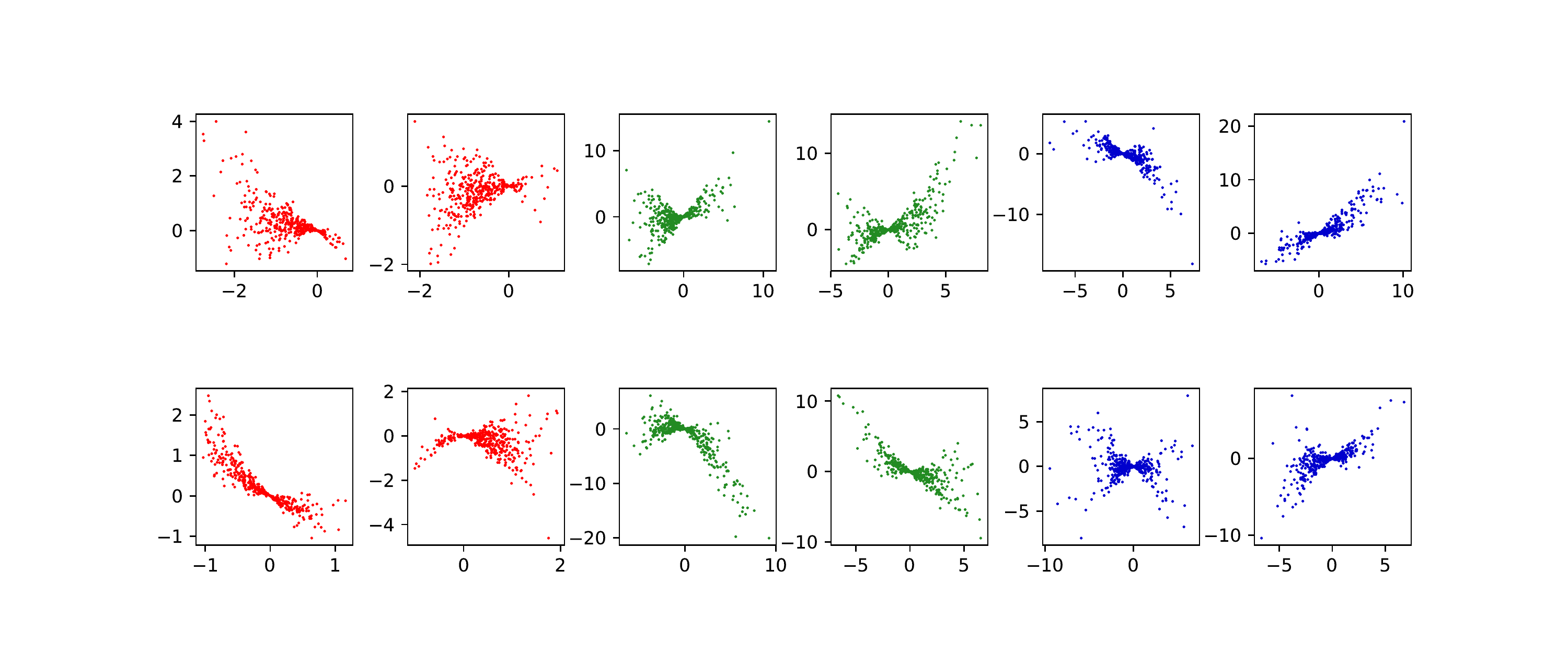}};
\foreach \x/\y/\a in {0/0/a, 1/0/b, 0/1/c, 1/1/d}
    \node at (\x * 2.15 - 5.1, -\y * 2.75 + 0.25) {(\a)};

\foreach \x/\y/\a in {0/0/a, 1/0/b, 0/1/c, 1/1/d}
    \node at (\x * 2.15 - 0.91, -\y * 2.75 + 0.25) {(\a)};

\foreach \x/\y/\a in {0/0/a, 1/0/b, 0/1/c, 1/1/d}
    \node at (\x * 2.15 + 3.4, -\y * 2.75 + 0.25) {(\a)};

\foreach \x in {1,2,3}
    \node at (\x * 4.3 - 8.4, -2.8) {hidden layer \x};
\end{tikzpicture}
\caption{Product activations of LPNN on the mnist dataset. The model has three hidden layers. From each layer, activations of four hidden units are plot here in the same color.}
\label{fig:product_act}
\end{figure*}

\section{Experiment}

\begin{table}
\begin{center}
\caption{Approximate the product activation with a one-layer feedforward neural network . }
\label{tab:approx_prod} 
\scalebox{0.89}{
\begin{tabular}{|r|cccc|}
\hline
\# hidden & 1 & 2 & 3 & 4\\
\hline
product & 1.13 $\pm$ .00 & 0.80 $\pm$ .04 & 0.33 $\pm$.10 & 0.03 $\pm$.01 \\
\hline
ReLU & 0.12 $\pm$ .00 & 0.09 $\pm$ .00 & 0.05 $\pm$.00 & 0.04 $\pm$.01 \\
\hline
\end{tabular}}
\end{center}
\end{table}

\begin{table*}[t]
\caption{RMSE of different models on regression tasks}
\label{tab:regression}
\begin{center}
\scalebox{1.0}{
\begin{tabular}{c|ccccc}
methods & wine red              & power plant          & kin8nm                & boston housing       &  concrete       \\
\hline
\hline
\acrshort{ffr}      & 0.60 $\pm$ 0.04  & 4.02 $\pm$0.18   & 0.100 $\pm$ 0.002~~  & 2.82 $\pm$ 0.76    & 5.10 $\pm$0.49  \\
\hline
\acrshort{fm}      &   \textbf{0.73 $\pm$ 0.09}~~ & 4.43 $\pm$ 0.15~~  & 0.155 $\pm$ 0.004~~  & 4.80 $\pm$ 1.14~~   & 8.52 $\pm$0.59~~  \\
\acrshort{pk}      &  4.39 $\pm$ 5.50~~ & \textbf{4.14 $\pm$ 0.14}~~  & \textbf{0.100 $\pm$ 0.005}~~  & 41.9 $\pm$ 77.2~~   & 7.95 $\pm$2.42~~  \\
\acrshort{ffq}     &  5.49 $\pm$ 16.5~~ &5.83 $\pm$ 1.39 ~~  & \textbf{0.102 $\pm$ 0.007}~~  & \textbf{4.59 $\pm$ 2.74}~~   & 5.58 $\pm$0.48~~  \\
\hline
\lopon    &  \textbf{0.82$\pm$ 0.18}~~ & \textbf{4.24 $\pm$ 0.18}~~  & \textbf{0.099 $\pm$ 0.006}   & \textbf{4.05 $\pm$ 2.13}~~   & \textbf{5.20 $\pm$0.62}~~  \\
\hline
\end{tabular}

}
\end{center}
\end{table*}

\begin{table*}[t]
\caption{Error rates of different models on classification tasks}
\label{tab:classfication}
\begin{center}
\scalebox{1.0}{
\begin{tabular}{c|ccccccc}	
methods & mnist        & fashion-mnist  & skin             &sensIT          & letter           &covtype-b       & covtype \\	
\hline	
\hline	
\acrshort{ffr}   & 0.0185          & 0.108 & 0.0313           & 0.176          & 0.096            & 0.113          &0.146 \\	
\hline	
\acrshort{fm}   & 0.0573          & 0.167          & 0.0439           & 0.260          & 0.546            & 0.208          &0.575 \\	
\acrshort{pk}   & 0.0506          & 0.168             & 0.0039           & 0.225          & 0.248            & 0.191          &0.494 \\	
\acrshort{ffq}  & 0.0503          & 0.127          & \textbf{0.0018}           & 0.199          & 0.104            & \textbf{0.097} & \textbf{0.103} \\	
\hline	
\lopon & \underline{\textbf{0.0171}} & \textbf{0.117}          &  \underline{\textbf{0.0017}}  & \textbf{0.175} &  \underline{\textbf{0.0729}}  & 0.117          &0.140 \\	
\hline	
\end{tabular}

}
\end{center}
\end{table*}

\begin{table*}[t]
\caption{Effect of batch normalization and dropout}
\label{tab:concrete-bn-dropout}
\begin{center}
\begin{tabular}{c|ccccc}
\hline
$L$ & 1 & 2 & 3 & 5 & 10 \\
\hline
neither          
& \textbf{7.76 $\pm$ 0.53}
& \textbf{6.30 $\pm$ 0.96}
& 6.89 $\pm$ 2.06         
& 7.86 $\pm$ 3.10         
& 7.49 $\pm$ 2.77             \\
only dropout     
& \textbf{7.78 $\pm$ 0.54}
& \textbf{5.98 $\pm$ 0.57}
& \textbf{5.12 $\pm$ 0.58}
& \textbf{4.92 $\pm$ 0.78}
& \textbf{4.71 $\pm$ 0.90}    \\
only BN          
& \textbf{7.82 $\pm$ 0.55} 
& \textbf{6.26 $\pm$ 0.87} 
& 5.82 $\pm$ 1.13          
& \textbf{5.46 $\pm$ 1.83} 
& \textbf{4.97 $\pm$ 0.97}   \\
BN and dropout               
& \textbf{7.77 $\pm$ 0.53}
& \textbf{6.05 $\pm$ 0.50}
& \textbf{5.20 $\pm$ 0.62}
& \textbf{4.72 $\pm$ 0.66}
& \textbf{4.58 $\pm$ 0.74}   \\
\hline
\end{tabular}

\end{center}
\end{table*}

\begin{table*}[ht]
  \begin{minipage}[t]{0.6\linewidth}
	\caption{Effect of dropout and BN on the mnist dataset}
    \label{tab:mnist-bn-dropout}
\scalebox{0.88}{
	\begin{tabular}{c|ccccc}
\hline
$L$ & 1 & 2 & 3 & 5 & 10 \\
\hline
neither
& \textbf{0.0171}
& \textbf{0.0192}
& \textbf{0.0207}
& 0.8947         
& \textbf{0.0241} \\
only dropout
& 0.0208             
& \textbf{0.0188}    
& \textbf{0.0187}    
& 0.7657             
& 0.0298          \\
only BN 
& 0.0242         
& \textbf{0.0202}
& 0.0229         
& 0.0271         
& \textbf{0.0207} \\
BN and dropout    
& \textbf{0.0191}
& \textbf{0.0191}
& \textbf{0.0170}
& \textbf{0.0207} 
& \textbf{0.0230} \\
\hline
\end{tabular}

}
  \end{minipage}%
  \hfill
  \begin{minipage}[t]{0.4\linewidth}
    \captionof{figure}{The distribution of the network outputs. } 
    \label{fig:approx}
    \centering
\includegraphics[height=0.35\textwidth]{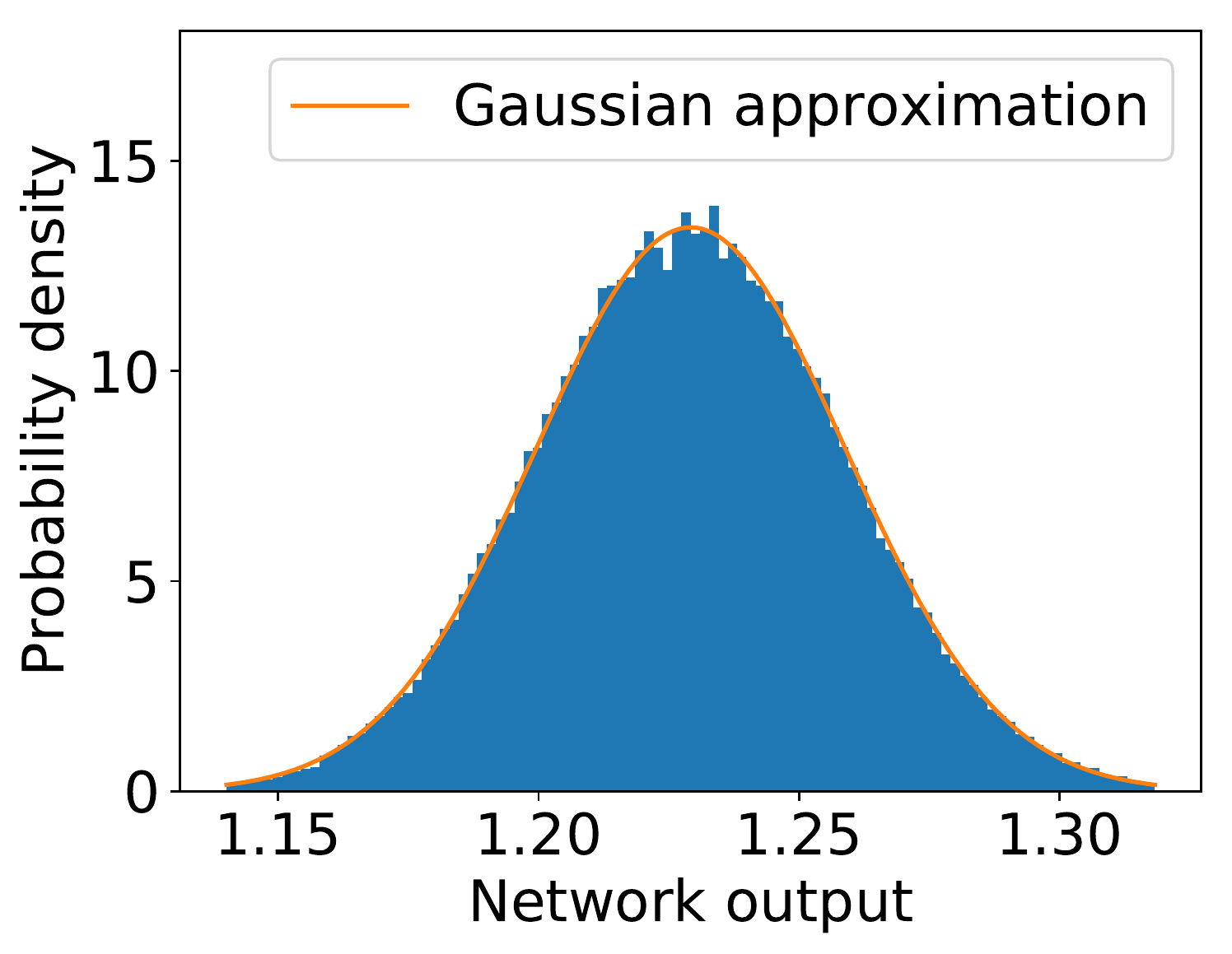}
\hspace{0.3cm}
\includegraphics[height=0.35\textwidth]{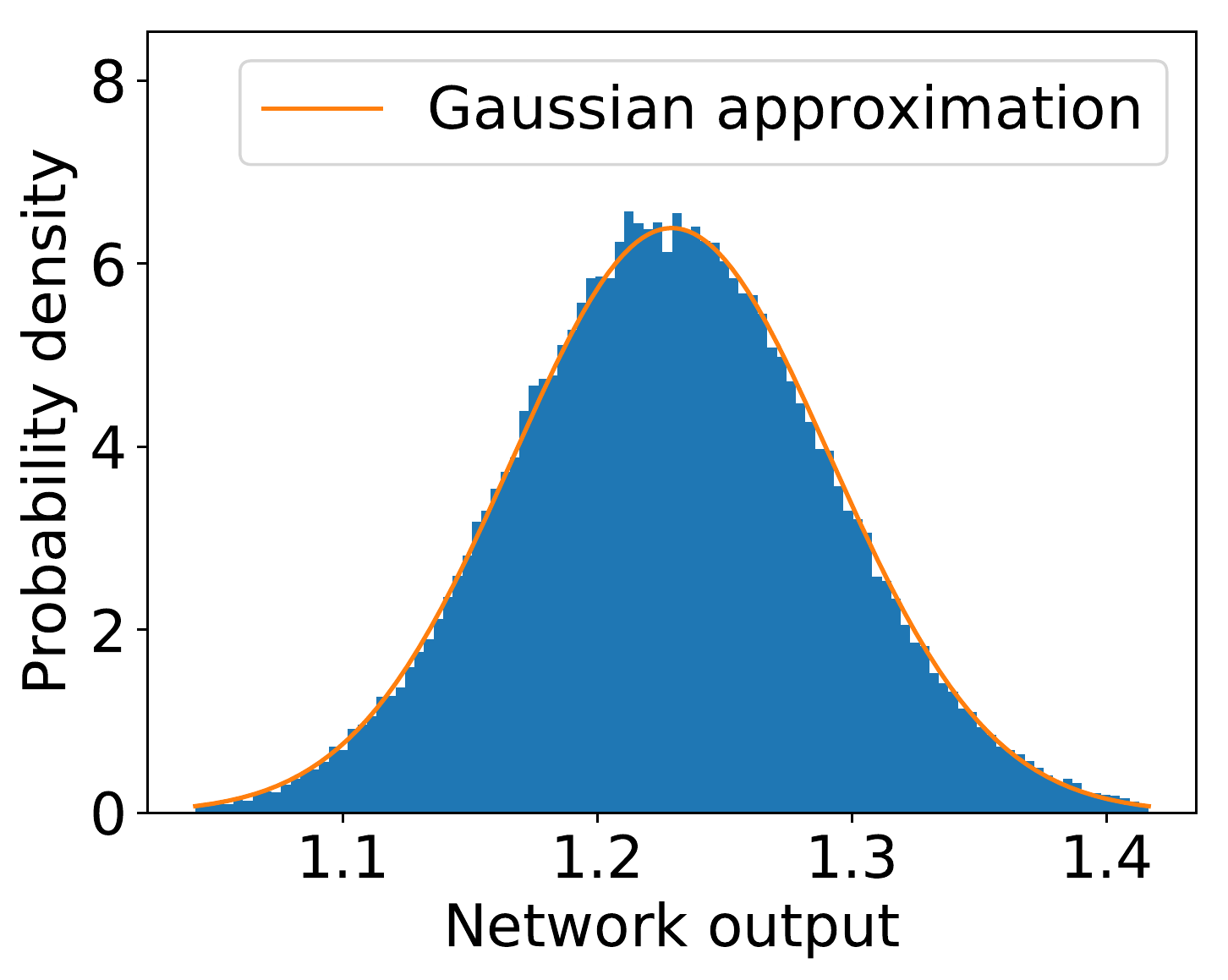}
  \end{minipage}
\end{table*}

\subsection{The product activation}

We first check the uniqueness of the product activation. In this experiment, we check how well a two-layer feedforward neural network can fit the multiplication function. If they cannot easily fit such a function, then it means common activations are unlikely to imitate a product activation.  We randomly generate two features $(x_1, x_2)$ from the range $[-1, 1] \times [-1, 1]$ and then define the multiplication $y_p = 4 x_1 x_2$ to represent a product activation. With the constant, the average of the absolute value of $y_p$ is 1. We use a feedforward network with the $\mathrm{\tanh}$ activation, $y_{f} = \bw^{\top} \mathrm{\tanh}(\bW [x_1, x_2]^\top + \bb_1) + b_2 $,  to fit the multiplication $y_p$. Here the number of rows in $\bW$ is the number of hidden units in this one-layer neural network. We train the neural network  for 200 epochs and record the RMSE. We run the experiment for 10 times and take the average and variance of 10 RMSE values.    

The results are tabulated in the first row of Table \ref{tab:approx_prod}. It is actually hard for the feedforward neural network to fit the product operation. The feedforward neural network needs to use 4 hidden units to get satisfying results. As a reference, it is much easier for the same network to fit a ReLU activation . The ReLU activation is incorporated into a function $y_r = \mathrm{relu}(0.5x_1 + 1.5x_2) / C$, with $C$ normalizing $y_r$ to have an average absolute value of 1. The feedforward network has much smaller error when fitting the function $y_r$ (see RMSEs in the second row of Table \ref{tab:approx_prod}). When the product activation is an effective way to combine two features through multiplication, it is not easy for other activations like $\mathrm{tanh}$ to appproximate the same operation. This result indicates that some information processing by product activations are actually hard for common activations.    

We then examine the product activation function $\bh = \sigma_p(\bu; \bV, \bx)$ in a trained model. We set up a \lopon with three hidden layers and then train it on the mnist dataset. The training finishes after 20 epoches when the network has a validation accuracy of 0.984. Then we check the input $\bu$ and the response $\bh$ of the activation functions at three different layers. We plot the response $h_i$ against the corresponding $u_i$ for each hidden unit $i$ to generate a subplot. We randomly select 400 instances and plot each $(h_i, u_i)$ pair.  We plot four hidden units at each of all three hidden layers and generate plots in Figure \ref{fig:product_act}.

In these results, we see that the product activation is not really a function because inputs with the same value  
may have different responses. It is also clear that the product activation is not linear. The behavior of the activation is versatile: 
the activations shown in (c) at layer 1 and (b) at layer 3 exhibit some linear behavior while the activations shown in (d) at layer 1 
and (c) at layer 2 roughly approximate the quadratic function.

\subsection{Regression and Classification}

In this section, we evaluate the \lopon on several regression and classification tasks. The \lopon is compared against feedforwrad network and three polynomial learning models. All models are summarized below.  

\textit{Feedforward network (\acrshort{ffr}):} feedforward networks uses ReLU functions as activations. We add $l$-2 norm regulraization to the model. The regularization weight is chosen from \{1e-6, 1e-5, 1e-4, 5e-4\}. When dropout is applied, the dropout rate is chosen from \{0, 0.05, 0.1, 0.2, 0.4\}.

\textit{Polynomial Neural Network (\acrshort{ffq}):} the model is the same as the \acrshort{ffr} except its activations are the quadratic function. It has the same hyperparameters as \acrshort{ffr}, and it is trained in the same way as \acrshort{ffr}.

\textit{Factorization Machine (\acrshort{fm}):} we use the implementation from the \texttt{sklearn} package \citep{niculae2019}. The order of \acrshort{fm} in this implementation can be 2 or 3. It add several ANOVA kernel functions (called factors) to increase model complexity. The model is also regularized by $l$-2 norm. The hyperparameters of \acrshort{fm} include the order, the number of factors, and the weight of regularization. The number of factors is chosen from from \{2, 4, 8, 16\}, and the regularization weight is chose from the same range as \acrshort{ffr}. This implementation of \acrshort{ffr} does not have multiple outputs, so we have used one-vs-rest for multiclass classification problems.   
 
\textit{Polynomial Kernel (\acrshort{pk}):} \acrshort{pk} uses polynomial kernels. Other than that, \acrshort{pk} is similar to \acrshort{fm}. We can specify the order of the underlying polynomial function of \acrshort{pk}. The hyperparameters of \acrshort{pk} are the same as \acrshort{fm}. The implementation is also from the \texttt{sklearn} package.   

\textit{\lopon:} the model is the same as the \acrshort{ffr} except its activations are product activations. Its hyperparameters are the same as \acrshort{ffr}, and it is trained in the same way as \acrshort{ffr}.

We test these models on five regression datasets (wine-quality, power-plant, kin8nm, boston-housing, and concrete-strength) and six classification 
datasets (mnist, fashion-mnist, skin, sensIT, letter, covtype-b, and covtype). The mnist and fasion-mnist datasets come with the 
Keras package, the skin, sensIT, and covtype-b datasets are from the libSVM website, and all other datasets are from the UCI repository.  

\parhead{Regression.} We first apply the model to five regression tasks. We use the same data splits by \citet{gal2019repo}. Each dataset has 20 random splits. On each split, we run model selection through five-fold cross validation, re-train the model, and then test the model on the test set. The results are averaged over the 20 splits. For \acrshort{ffr}, \acrshort{ffq}, and \lopon, we set three hidden layers and 50 hidden units in each hidden layer. We apply dropout and batch normalization to all the three models. We set the polynomial order to be 4 for the \acrshort{pk} model to match the order as \lopon. We set the order of \acrshort{fm} to be 3. For each model, we select all hyperparameters described in the subsection above.

Table \ref{tab:regression} tabulates RMSE of all algorithms on all datasets. Each entry is the average RMSE over 20 splits and its standard deviation. We compare \lopon against the competing polynomial models with paired $t$-tests, and the performance(s) of the best polynomial model(s) is bolded here. In general, \lopon performs better than other polynomial models. The \acrshort{ffq} has very bad performances on two splits of the wine-quality dataset. We speculate that \acrshort{ffq} is not stable when its polynomial order is high. \acrshort{pk} has bad performances on wine-quality and boston-housing because the model does not fit the two tasks-- its performances are bad on most splits. The performance of \lopon is slightly worse than the perforamnce of \acrshort{ffr}.

 \parhead{Classification.} We then test these models on seven classification tasks. For each dataset, we set 30\% as the test set, except for mnist and fasion-mnist datasets, which come with test sets. We do model selection for both architecture and hyperparameters on 20\% of the training set. For neural networks, the number of hidden layers is chosen from $\{1, 2, 4\}$. We shrink the number of hidden units from the bottom to the top. The number of hidden units is computed by $\alpha^{\ell} (d_{out} - d_{in}) + d_{out}$ so that the number of hidden units in a middle layer is between the input dimension and the output dimension. The shrinking factor $\alpha$ is chose from $\{0.3, 0.5, 0.7, 0.8\}$. We also select the order for \acrshort{pk} from $\{2, 3, 5\}$ to match the order of \lopon. All other hyperparameters of a model are also selcted together with architectures.
 
The error rates of different models are reported in Table \ref{tab:classfication}. We omit standard deviations in the table to save space. Polynomial models are compared with paired $t$-tests. The performance of the best polynomial model is bolded. We also compare \lopon against \acrshort{ffr}: if \lopon is significantly better than \acrshort{ffr}, we underline the \lopon's performance. The performance of \lopon is better other polynomial models in general. \lopon is comparable to \acrshort{ffr} on classification tasks. We speculate the reason is that an \lopon only needs to decide discrete labels from its outputs in classification tasks while it needs to fit the exact value in regression tasks. \lopon may be not flexible enough for fitting continuous values compared to feedforward networks.

In a summary, the \lopon narrows the performance gap between polynomial models and the well-studied feedforward neural networks, making the polynomial more practical in typical learning tasks.

\subsection{The effect of batch normalization and dropout}

In this subsection, we investigate the effects of batch normalization and dropout on \lopon. We use $L \in \{1, 2, 3, 5, 10 \}$. For each depth, we try four combinations: using/not using batch normalization and/or dropout. We select other hyperparameters through model selection. We run the experiment on a regression task (concrete-strength) and a classification task (mnist).  

The results are shown in Table \ref{tab:concrete-bn-dropout} and \ref{tab:mnist-bn-dropout}. For each depth $L$, the four combinations are compared with paired $t$-tests, and the best performance(s) across four combinations are bolded. From this result, we see that both batch normalization and dropout are needed to train a good \lopon model when the model is deep. On the mnist dataset, the \lopon without batch normalization has very bad performance when $L=5$. Its performance drops sharply after a few epochs. This observation indicates that the \lopon without batch normalization is very unstable due to some bad optimization directions.

We also have the following observations in training: batch normalization in training tends to increase the depth of the model; and dropout tends to decrease the scale of weight matrices. These observations are consistent with previous observations in the literature.  

\subsection{Network outputs with stochastic network weights}

In this experiment we check the network output $\bh^{L}$ when the network parameters in $\theta$ are from a Gaussian prior. We use a \lopon with  $L=5$ layers. The mean of the prior is given by network weights of a trained \lopon. The variance of the prior is set to $\sigma^2 \bI$. Then we sample 10,000 samples from the Gaussian prior as network weights to compute samples of $\bh^{L}$. Here $\bh^{L}$ has only one entry, so we can plot its samples into a histogram. At the same time, we compute the moments of the distribution of $\bh^{L}$ and get a Gaussian approximation. Figure \ref{fig:approx} shows the histograms and the corresponding approximate Gaussian distributions for $\sigma^2=0.05$ (left) and  $\sigma^2=0.1$ (right). From this result, we see that the Gaussian approximation is very accurate.

\section{Conclusion}

In this paper, we propose the product activation and use it to construct the \lopon,  a new type of polynomial neural networks. As a neural network, a \lopon can be trained with modern training techniques, such as dropout and batch normalization. As a decomposition model, it connects other decomposition models with neural networks. Now we have a new approach to convert other decomposition models to neural networks that have a similar structure as a \lopon.  

The network function of \lopon is multilinear in its parameters. With this property,  we can efficiently calculate moments of the network's outputs when the network is given a prior. The moments allow us to approximate the distribution of the network outputs, then we will be able to approximately maximize the likelihood of the data without using complex inference algorithms.  

With its unique theoretical properties and competitive performances in practice, the \lopon a valuable learning model.

\section*{Acknowledgement}

The work was supported by NSF 1850358. 

\bibliography{ref-liping}
\bibliographystyle{icml2020}

\end{document}


\onecolumn
\icmltitle{The Product Activation and Low-Order Polynomial Neural Networks}



\icmlsetsymbol{equal}{*}

\begin{icmlauthorlist}
\icmlauthor{Aeiau Zzzz}{equal,to}
\icmlauthor{Bauiu C.~Yyyy}{equal,to,goo}
\icmlauthor{Cieua Vvvvv}{goo}
\icmlauthor{Iaesut Saoeu}{ed}
\icmlauthor{Fiuea Rrrr}{to}
\icmlauthor{Tateu H.~Yasehe}{ed,to,goo}
\icmlauthor{Aaoeu Iasoh}{goo}
\icmlauthor{Buiui Eueu}{ed}
\icmlauthor{Aeuia Zzzz}{ed}
\icmlauthor{Bieea C.~Yyyy}{to,goo}
\icmlauthor{Teoau Xxxx}{ed}
\icmlauthor{Eee Pppp}{ed}
\end{icmlauthorlist}

\icmlaffiliation{to}{Department of Computation, University of Torontoland, Torontoland, Canada}
\icmlaffiliation{goo}{Googol ShallowMind, New London, Michigan, USA}
\icmlaffiliation{ed}{School of Computation, University of Edenborrow, Edenborrow, United Kingdom}

\icmlcorrespondingauthor{Cieua Vvvvv}{c.vvvvv@googol.com}
\icmlcorrespondingauthor{Eee Pppp}{ep@eden.co.uk}

\icmlkeywords{Machine Learning, ICML}

\vskip 0.3in



\printAffiliationsAndNotice{\icmlEqualContribution} 

\setcounter{equation}{15}

\section*{Appendix}

Smoothness of a neural network function is one way to understand the robustness of the neural network in adversarial learning. We have also analyzed the smoothness of the \lopon function.  

\subsection*{Smoothness of the function surface}

The smoothness of the function surface of a neural network characterizes the network's important properties, such as being robust to input perturbations. This subsection gives an upper bound of the Lipschitz constant of the \lopon function. 

We first compute the first order derivative of $\bh^\ell(x)$ with respective to the input $\bx$. According to (5), we have,
\begin{align*}
\nabla \bh^1 &= \diag(\bV^1 \bx) \bW^1 + \diag(\bW^{1} \bx) \bV^{1}, \\
\nabla \bh^\ell &= \diag(\bV^\ell \bx) \bW^\ell (\nabla \bh^{\ell-1}) + \diag(\bW^{\ell} \bh^{\ell-1}) \bV^{\ell}, \nonumber\\
& \hspace{0.2in} \ell=2, 3, \cdots, L. 
\end{align*}

Then we can give a bound of the first order derivative, which in turn gives an estimate of the Lipschitz constant of $\bh^\ell$ as a function of $\bx$ .  The results are summarized in the following theorem. Here, for the sake of simplicity, we use $l^2$ norm ($\| \cdot \|$) for both vectors and matrices. 

\begin{theorem}
If $\bh^\ell$ is defined recursively as in (1) and (2), then, for $\ell=1,2,\ldots, L$, 
\begin{align}\label{ine:h-bound}
\| \bh^\ell \| &\leq \left( \prod_{k=1}^\ell \| \bV^k\| \| \bW^{k} \| \right) \| \bx \|^{\ell+1} 
\end{align}
and
\begin{align}\label{ine:grad-h-bound}
\| \nabla \bh^\ell(\bx) \| &\leq (\ell+1) \left(\prod_{k=1}^{\ell} \| \bV^k \| \| \bW^k \| \right) \|\bx\|^{\ell} 
\end{align}
where the matrix norm $\| \cdot \|$ is the maximal singular value of its input matrix.
\end{theorem}

\begin{proof}
Our proof is based on two inequalities. The first one is from the definition of 
matrix norm, e.g.~$\|\bW^\ell \bh^{\ell - 1} \| \le \rho\left( \bW^\ell \right)  \|\bh^{\ell - 1}\|$. 
The second one is the Cauchy-Schwarz inequality, e.g.~$\|(\bW^\ell \bh^{\ell - 1}) \odot (\bV^\ell \bx)\| \le  
\|(\bW^\ell \bh^{\ell - 1}) \| \cdot \| (\bV^\ell \bx)\| $.  

We will use mathematical induction to derive the results. For $\ell=1$,
\begin{align*}
\| \bh^1 \| &= \| \diag(\bV^1 \bx) \bW^1 \bx\| \leq \|\bV^1 \| \| \bW^1 \| \|\bx\|^2, \\
\| \nabla \bh^1 \| &= \| \diag(\bV^1 \bx) \bW^1 + \diag(\bW^1 \bx) \bV^1 \|  \leq 2 \| \bV^1 \| \| \bW^1 \| \|\bx \|.
\end{align*}
This means~\eqref{ine:h-bound} and~\eqref{ine:grad-h-bound} hold for $\ell=1$.  Now assume~\eqref{ine:h-bound} and~\eqref{ine:grad-h-bound} hold for $\ell-1$, then 
\begin{align*}
\| \bh^\ell \| & = \| \diag(\bV^\ell \bx) \bW^{\ell} h^{\ell-1} \| \leq \| \bV^\ell \| \|\bx \| \|\bW^\ell\| \|\bh^{\ell-1}\| \leq \left(\prod_{\ell=1}^{k} \| \bV^\ell \| \|\bW^\ell\| \right) \|\bx\|^{\ell+1},
\end{align*}
and 
\begin{align*}
\| \nabla \bh^\ell \| & =\| \operatorname{diag}(\bV^\ell \bx) \bW^\ell (\nabla \bh^{\ell-1}(\bx)) + \operatorname{diag}(\bW^\ell \bh^{\ell-1}(\bx)) \bV^\ell \| \\
	& \leq \| \bV^\ell \| \|\bx\| \|\bW^\ell\| \| \nabla \bh^{\ell-1}(\bx) \| + \| \bW^\ell \| \| \bh^{\ell-1}(\bx)\| \|\bV^\ell\|   \\
	& = (\ell+1) \left( \prod_{k=1}^{\ell} \|\bV^k \| \|\bW^{k}\|  \right) \|\bx\|^\ell.
\end{align*}
\end{proof}

This bound indicates the relation between the smoothness and the norm of weight matrices. The result is similar to the feedforward neural networks with other activation functions whose gradients are always between -1 and 1 \citep{virmaux2018lipschitz}. 
As a comparison, an \acrshort{ffq} is much less smooth in general, as its order is exponential in its number of layers.

The proof above is independent of the data and network parameters. We can further improve the bound by considering network weights with an approach similar to  \citep{virmaux2018lipschitz}, but we defer the further investigation to future work.

\bibliography{ref-liping}
\bibliographystyle{icml2020}


%

%

\onecolumn
\title{Supplementary Materials for the Submission ``Polynomial \\ Functions are Strong Learners''}

\section{Missing Proofs}
\subsection{Proof of Theorem 4.1}

\setcounter{section}{4}

\begin{theorem} 
The learning models in the form of $y(\bx) = \sum_{k=1}^K \pi_k (\lambda + \bp_k^\top \bx)^m$ \citep{blondel2016polynomial} can be written as a \lopon function. 
\end{theorem}
\setcounter{section}{1}

\begin{proof}
By Appendix D.3 in \citep{blondel2016polynomial} , the function of a factorization machine can be computed by 
\begin{align*}
FM(\bx; \bU, \bS) 
&= \frac{1}{2}\left[\bx^\top \bU \bS^\top \bx -  (\bx \odot \bx)^\top \diag\left(\bS \bU^\top \right) \right] 
\end{align*}

We can set $\bW^1$ and $\bV^1$ and get $\bh^1$ as follows. 
\begin{align*}
\bW^1 = \left[ \begin{array}{c}
\bU \\
\bI
\end{array} \right],  \bV^1 = \left[ \begin{array}{c}
\bS \\
\bI
\end{array} \right], 
\bh^1 = \left[ \begin{array}{c}
(\bU \bx) \odot (\bS \bx) \\
\bx \odot \bx
\end{array} \right]
\end{align*}
Then we set $\bW^2 = [\bone^\top, - \diag\left(\bS \bU^\top \right)]$, then $\bW^2 \bh^1 = FM(\bx; \bU, \bS)$. \end{proof}

\subsection{Proof of Theorem 4.2}

\setcounter{section}{4}
\begin{theorem}
The second-order factorization machines taking the form
$y(\bx) = w_0 + \bw_1^\top \bx + \sum_{i=1}^{d_0}\sum_{j =i+1}^{d_0} \bv_i^\top \bv_j x_i x_j$ \citep{rendle2010factorization} can be written as a \lopon function.   
\end{theorem}
\setcounter{section}{1}

\begin{proof} 
We first write the polynomial kernel function, $\calP^m(\bp, \bx) = (\lambda + \bp^\top \bx)^m, \bp, \bx \in \calR^{d}$ in the form of $\loponfunc(\bx; \theta)$. Append an element 1 to the feature vector $\bx$ as the new input $[\bx, 1]$ to the network. Set $\bW^1 = [\bp^\top, \lambda]$, $\bV^\ell = [\bp^\top, \lambda]$ for all $\ell = 1, \ldots, m-1$. They are all matrices with only one row. Set $\bW^{\ell}$ to be a single element matrix $[1]$ for $\ell = 2, \ldots, L$, then $\loponfunc(\bx; \theta)$ is equivalent to the kernel by (6). 

To get a weighted sum of multiple kernels, we can just put each kernel in each row of $\bW^1$ and every $\bV^{\ell}$. Then we set all $\bW^{\ell}, \ell =  2, \ldots, L$ to the identity matrix  with size $K$. Then all these kernels work in parallel in the \lopon network. Finally, we add an extra layer to take the weighted sum of the values of these kernels.
\end{proof}

\section{Smoothness of the function surface}

The smoothness of the function surface of a neural network characterizes the network's important properties, such as being robust to input perturbations. This subsection gives an upper bound of the Lipschitz constant of the \lopon function. 

We first compute the first order derivative of $\bh^\ell(x)$ with respective to the input $\bx$. According to (5), we have,
\begin{align*}
\nabla \bh^1 &= \diag(\bV^1 \bx) \bW^1 + \diag(\bW^{1} \bx) \bV^{1}, \\
\nabla \bh^\ell &= \diag(\bV^\ell \bx) \bW^\ell (\nabla \bh^{\ell-1}) + \diag(\bW^{\ell} \bh^{\ell-1}) \bV^{\ell}, \nonumber\\
& \hspace{0.2in} \ell=2, 3, \cdots, L. 
\end{align*}

Then we can give a bound of the first order derivative, which in turn gives an estimate of the Lipschitz constant of $\bh^\ell$ as a function of $\bx$ .  The results are summarized in the following theorem. Here, for the sake of simplicity, we use $l^2$ norm ($\| \cdot \|$) for both vectors and matrices. 

\begin{theorem}
If $\bh^\ell$ is defined recursively as in (1) and (2), then, for $\ell=1,2,\ldots, L$, 
\begin{align}\label{ine:h-bound}
\| \bh^\ell \| &\leq \left( \prod_{k=1}^\ell \| \bV^k\| \| \bW^{k} \| \right) \| \bx \|^{\ell+1} 
\end{align}
and
\begin{align}\label{ine:grad-h-bound}
\| \nabla \bh^\ell(\bx) \| &\leq (\ell+1) \left(\prod_{k=1}^{\ell} \| \bV^k \| \| \bW^k \| \right) \|\bx\|^{\ell} 
\end{align}
where the matrix norm $\| \cdot \|$ is the maximal singular value of its input matrix.
\end{theorem}

\begin{proof}
Our proof is based on two inequalities. The first one is from the definition of 
matrix norm, e.g.~$\|\bW^\ell \bh^{\ell - 1} \| \le \rho\left( \bW^\ell \right)  \|\bh^{\ell - 1}\|$. 
The second one is the Cauchy-Schwarz inequality, e.g.~$\|(\bW^\ell \bh^{\ell - 1}) \odot (\bV^\ell \bx)\| \le  
\|(\bW^\ell \bh^{\ell - 1}) \| \cdot \| (\bV^\ell \bx)\| $.  

We will use mathematical induction to derive the results. For $\ell=1$,
\begin{align*}
\| \bh^1 \| &= \| \diag(\bV^1 \bx) \bW^1 \bx\| \leq \|\bV^1 \| \| \bW^1 \| \|\bx\|^2, \\
\| \nabla \bh^1 \| &= \| \diag(\bV^1 \bx) \bW^1 + \diag(\bW^1 \bx) \bV^1 \|  \leq 2 \| \bV^1 \| \| \bW^1 \| \|\bx \|.
\end{align*}
This means~\eqref{ine:h-bound} and~\eqref{ine:grad-h-bound} hold for $\ell=1$.  Now assume~\eqref{ine:h-bound} and~\eqref{ine:grad-h-bound} hold for $\ell-1$, then 
\begin{align*}
\| \bh^\ell \| & = \| \diag(\bV^\ell \bx) \bW^{\ell} h^{\ell-1} \| \leq \| \bV^\ell \| \|\bx \| \|\bW^\ell\| \|\bh^{\ell-1}\| \leq \left(\prod_{\ell=1}^{k} \| \bV^\ell \| \|\bW^\ell\| \right) \|\bx\|^{\ell+1},
\end{align*}
and 
\begin{align*}
\| \nabla \bh^\ell \| & =\| \operatorname{diag}(\bV^\ell \bx) \bW^\ell (\nabla \bh^{\ell-1}(\bx)) + \operatorname{diag}(\bW^\ell \bh^{\ell-1}(\bx)) \bV^\ell \| \\
	& \leq \| \bV^\ell \| \|\bx\| \|\bW^\ell\| \| \nabla \bh^{\ell-1}(\bx) \| + \| \bW^\ell \| \| \bh^{\ell-1}(\bx)\| \|\bV^\ell\|   \\
	& = (\ell+1) \left( \prod_{k=1}^{\ell} \|\bV^k \| \|\bW^{k}\|  \right) \|\bx\|^\ell.
\end{align*}
\end{proof}

This bound indicates the relation between the smoothness and the norm of weight matrices. The result is similar to the feedforward neural networks with other activation functions whose gradients are always between -1 and 1 \citep{virmaux2018lipschitz}. 
As a comparison, an \acrshort{ffq} is much less smooth in general, as its order is exponential in its number of layers.

The proof above is independent of the data and network parameters. We can further improve the bound by considering network weights with an approach similar to  \citep{virmaux2018lipschitz}, but we defer the further investigation to future work. 

\bibliography{ref-liping}
\bibliographystyle{icml2020}